\newtheorem{theorem}{Theorem}
\DeclareMathOperator{\argmin}{argmin} 
\DeclareMathOperator{\argmax}{argmax} 
\newcommand{\thickrelbar}[1][1.5pt]{\ensuremath{\mathrel{\rule[0.5ex]{1em}{#1}}}}
\definecolor{darkpurple}{rgb}{0.4, 0.0, 0.4}
\newcommand{\algname}{{\sc BIGIT*}}
\title{\LARGE \bf

Asymptotically Optimal Sampling-Based Path Planning Using Bidirectional Guidance Heuristic   
}
\author{ Yi Wang and Bingxian Mu
\thanks{
  }
}
\begin{document}

\maketitle
\thispagestyle{empty}
\pagestyle{empty}

\begin{abstract}
This paper introduces Bidirectional Guidance Informed Trees (BIGIT*),~a new asymptotically optimal sampling-based motion planning algorithm. Capitalizing on 
the strengths of \emph{meet-in-the-middle} property in
bidirectional heuristic search with a new lazy strategy, and uniform-cost search, BIGIT* constructs an implicitly bidirectional preliminary motion tree on an implicit random geometric graph (RGG). This efficiently tightens the informed search region, serving as an admissible and accurate bidirectional guidance heuristic. This heuristic is subsequently utilized to guide a bidirectional heuristic search in finding a valid path on the given RGG. 
Experiments show that BIGIT* outperforms the existing informed sampling-based motion planners both in faster finding an initial solution and converging to the optimum on simulated abstract problems in $\mathbb{R}^{16}$. Practical drone flight path planning tasks across a campus also verify our results.

\end{abstract}

\section{INTRODUCTION}
Path planning is a crucial problem in robotics navigation. Its primary objective is to discover a path that efficiently circumvents obstacles, thereby ensuring a collision-free path from a specified start to a designated goal. Over the past decades, path-planning algorithms have been a focal point of extensive research.
Graph-search algorithms, for example, A*~\cite{A*} and Dijkstra's Algorithm~\cite{dijstra}, are capable of finding an optimal path within a given discretized environment. 

A heuristic function estimates the $cost$-$to$-$go$ from any state to a designated goal in a state space. The notable priority of an informed (best-first) search algorithm, like A*, utilizes heuristics to efficiently guide the search towards a goal.  
Identifying optimal solutions can be infeasible or difficult in a complex search environment when the planning time is finite. To tackle this challenge, anytime heuristic planners are devised, like ARA*~\cite{ARA} and AWA*~\cite{AWA}. 
These planners rapidly find an initial solution, and then iteratively optimize the solution until the allotted planning time expires. 
Additionally, incremental (replanning) graph-based search algorithms, such as D* Lite ~\cite{Dlite} and LPA*~\cite{koenig2004lifelong}, are efficiently to find a new feasible path by reusing the previous searching efforts in unknown or dynamic domains.



Bidirectional search is a strategy that intends to find an optimal solution by conducting two separate searches simultaneously from both the start and goal ~\cite{bis}. Its key strength lies in the potential to exponentially reduce the number of expanded states, as each search only proceeds to half the depth of the path solution~\cite{kof}. 
However, coordinating the meeting of the forward and backward searches presents a significant challenge using heuristics ~\cite{bi1,bi2}. {\em Meet-in-the-Middle} (MM)~\cite{mm} guarantees finding an optimal solution, and maintains \emph{meet-in-the middle } priority, such that each expanded state whose \emph{cost-to-come} from the respective search origins never exceeds half of the optimal solution.
 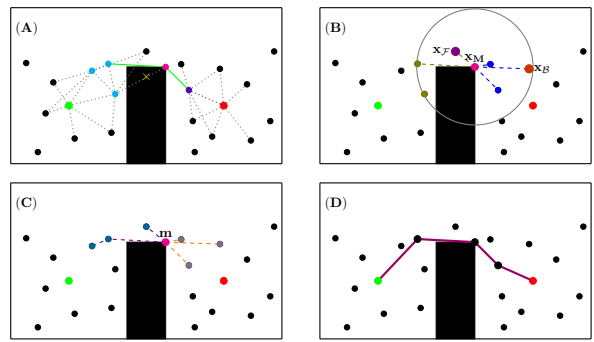
\begin{figure}
    \centering
      \scalebox{0.515}{
 \begin{tikzpicture} 
\draw[thick] (0,0) -- (7,0) node[anchor=north west]{};
\draw[thick] (0,0) -- (0,4.03) node[anchor=south east]{};
\draw[thick] (0,4.03) -- (7,4.03) node[anchor=south east]{};
\draw[thick] (7,0) -- (7,4.03) node[anchor=south east]{};

\node[anchor=west] at(0,3.5) {$(\bf{A})$};
  \filldraw[black] (3,0) rectangle(4,2.5);
 \draw[gray,dotted,thick] (5.5,1.5) -- (4.6,1.9);
  \draw[gray,dotted,thick] (5.5,1.5) -- (4.7,1.1);
   \draw[gray,dotted,thick] (5.5,1.5) -- (5.4,2.45);
  \draw[gray,dotted,thick] (5.5,1.5) -- (5.2,0.7);
   \draw[gray,dotted,thick] (5.5,1.5) -- (6.2,0.6);
  \draw[gray,dotted,thick] (5.5,1.5) -- (6.1,2.05);
 \draw[gray,dotted,thick] (1.5,1.5) -- (2.7,1.8);
  \draw[gray,dotted,thick] (1.5,1.5) -- (2.1,2.4);
   \draw[gray,dotted,thick] (1.5,1.5) -- (1.1,2.1);
  \draw[gray,dotted,thick] (1.5,1.5) -- (0.9,1.3);
   \draw[gray,dotted,thick] (1.5,1.5) -- (1.65,0.65);
  \draw[gray,dotted,thick] (1.5,1.5) -- (2.6,0.8);
  \draw[gray,dotted,thick] (2.7,1.8) -- (2.52,2.58);
    \draw[green,thick] (4,2.5) -- (2.52,2.58);
  \draw[gray,dotted,thick] (2.7,1.8) -- (4,2.5);
  \draw[gray,dotted,thick] (2.7,1.8) -- (3.5,2.9);
  \draw[gray,dotted,thick] (2.7,1.8) -- (2.1,2.4);
  \draw[gray,dotted,thick] (2.7,1.8) -- (2.6,0.8);
\draw[gray,dotted,thick] (5.5,1.5) -- (4.6,1.9);
\draw[gray,dotted,thick] (4.6,1.9) -- (4.7,1.1);
\draw[gray,dotted,thick] (4.6,1.9) -- (5.4,2.45);
\draw[green,thick] (4.6,1.9) -- (4,2.5);
\draw[gray,dotted,thick] (4.4,2.57) -- (4.6,1.9);
\draw[gray,dotted,thick] (5.2,0.7) -- (4.6,1.9);
 \draw[gray,dotted,thick] (2.1,2.4) -- (1.1,2.1);
  \draw[gray,dotted,thick] (2.1,2.4) -- (2.52,2.58);
   \draw[gray,dotted,thick] (2.1,2.4) -- (3.5,2.9);
  \draw[gray,dotted,thick] (2.1,2.4) -- (0.9,1.3);
\filldraw[red] (5.5,1.5) circle (2.5pt);
\filldraw[green] (1.5,1.5) circle (2.5pt);
\filldraw[black] (2.6,0.8) circle (2pt);
\filldraw[black] (0.7,0.3) circle (2pt);
\filldraw[black] (1.65,0.65) circle (2pt);
\filldraw[cyan] (2.7,1.8) circle (2pt);
\filldraw[cyan] (2.1,2.4) circle (2pt);
\filldraw[cyan] (2.52,2.58) circle (2pt);
\filldraw[black] (0.4,2.6) circle (2pt);
\filldraw[black] (0.9,1.3) circle (2pt);
\filldraw[black] (1.1,2.1) circle (2pt);
\filldraw[black] (3.5,2.9) circle (2pt);
\filldraw[magenta] (4,2.5) circle (2pt);
\filldraw[black] (4.4,2.57) circle (2pt);
\filldraw[red!30!blue] (4.6,1.9) circle (2pt);
\filldraw[black] (4.7,1.1) circle (2pt);
\filldraw[black] (4.75,0.3) circle (2pt);
\filldraw[black] (5.2,0.7) circle (2pt);
\filldraw[black] (6.2,0.6) circle (2pt);
\filldraw[black] (6.7,1.1) circle (2pt);
\filldraw[black] (5.4,2.45) circle (2pt);
\filldraw[black] (6.8,2.9) circle (2pt);
\filldraw[black] (6.1,2.05) circle (2pt);
\node[anchor=north] at(3.5,2.48) {\textcolor{yellow}{$\times$}};
\end{tikzpicture}
}
      \scalebox{0.515}{
 \begin{tikzpicture} 
\draw[thick] (0,0) -- (7,0) node[anchor=north west]{};
\draw[thick] (0,0) -- (0,4.03) node[anchor=south east]{};
\draw[thick] (0,4.03) -- (7,4.03) node[anchor=south east]{};
\draw[thick] (7,0) -- (7,4.03) node[anchor=south east]{};
  \filldraw[black] (3,0) rectangle(4,2.5);
\node[anchor=west] at(0,3.5) {$(\bf{B})$};

\filldraw[red] (5.5,1.5) circle (2.5pt);
\filldraw[green] (1.5,1.5) circle (2.5pt);
\filldraw[black] (2.6,0.8) circle (2pt);
\filldraw[black] (0.7,0.3) circle (2pt);
\filldraw[black] (1.65,0.65) circle (2pt);
\filldraw[black] (2.7,1.8) circle (2pt);
\filldraw[black] (2.1,2.4) circle (2pt);
\filldraw[black] (2.52,2.58) circle (2pt);
\filldraw[black] (0.4,2.6) circle (2pt);
\filldraw[black] (0.9,1.3) circle (2pt);
\filldraw[black] (1.1,2.1) circle (2pt);
\filldraw[black] (3.5,2.9) circle (2pt);
\filldraw[black] (4.4,2.57) circle (2pt);
\filldraw[black] (4.6,1.9) circle (2pt);
\filldraw[black] (4.7,1.1) circle (2pt);
\filldraw[black] (4.75,0.3) circle (2pt);
\filldraw[black] (5.2,0.7) circle (2pt);
\filldraw[black] (6.2,0.6) circle (2pt);
\filldraw[black] (6.7,1.1) circle (2pt);
\filldraw[black] (5.4,2.45) circle (2pt);
\filldraw[black] (6.8,2.9) circle (2pt);
\filldraw[black] (6.1,2.05) circle (2pt);
 \draw[red!50!green,thick,dashed] (4,2.5) -- (2.52,2.58);
  \draw[red!50!green,thick,dashed] (4,2.5) -- (3.5,2.9);
        \draw[gray,thick] (4,2.5) circle (1.5cm);
 \draw[blue,thick,dashed] (4,2.5) -- (5.4,2.45);
 \draw[blue,thick,dashed] (4,2.5) -- (4.4,2.57);
 \draw[blue,thick,dashed] (4,2.5) -- (4.6,1.9);
 \filldraw[red!50!green] (2.52,2.58) circle (2pt);
\filldraw[red!50!green] (2.7,1.8) circle (2pt);
\filldraw[red!50!blue] (3.5,2.9) circle (3pt);
\filldraw[blue] (4.4,2.57) circle (2pt);
\filldraw[blue] (4.6,1.9) circle (2pt);
\filldraw[red!80!green] (5.4,2.45) circle (3pt);
\node[anchor=south] at(4,2.5) {$\mathbf{\mathbf{x}_M}$};
\node[anchor=east] at(3.5,2.9) {$\mathbf{\mathbf{x}_\mathcal{F}}$};
\node[anchor=west] at(5.4,2.45) {$\mathbf{\mathbf{x}_\mathcal{B}}$};
\filldraw[magenta] (4,2.5) circle (2.5pt);
\end{tikzpicture}
}
      \scalebox{0.515}{
 \begin{tikzpicture}  
\draw[thick] (0,0) -- (7,0) node[anchor=north west]{};
\draw[thick] (0,0) -- (0,4.03) node[anchor=south east]{};
\draw[thick] (0,4.03) -- (7,4.03) node[anchor=south east]{};
\draw[thick] (7,0) -- (7,4.03) node[anchor=south east]{};
  \filldraw[black] (3,0) rectangle(4,2.5);
\node[anchor=west] at(0,3.5) {$(\bf{C})$};

\filldraw[red] (5.5,1.5) circle (2.5pt);
\filldraw[green] (1.5,1.5) circle (2.5pt);
\filldraw[black] (2.6,0.8) circle (2pt);
\filldraw[black] (0.7,0.3) circle (2pt);
\filldraw[black] (1.65,0.65) circle (2pt);
\filldraw[black] (2.7,1.8) circle (2pt);
\filldraw[black] (2.1,2.4) circle (2pt);
\filldraw[black] (2.52,2.58) circle (2pt);
\filldraw[black] (0.4,2.6) circle (2pt);
\filldraw[black] (0.9,1.3) circle (2pt);
\filldraw[black] (1.1,2.1) circle (2pt);
\filldraw[black] (3.5,2.9) circle (2pt);
\filldraw[black] (4.4,2.57) circle (2pt);
\filldraw[black] (4.6,1.9) circle (2pt);
\filldraw[black] (4.7,1.1) circle (2pt);
\filldraw[black] (4.75,0.3) circle (2pt);
\filldraw[black] (5.2,0.7) circle (2pt);
\filldraw[black] (6.2,0.6) circle (2pt);
\filldraw[black] (6.7,1.1) circle (2pt);
\filldraw[black] (5.4,2.45) circle (2pt);
\filldraw[black] (6.8,2.9) circle (2pt);
\filldraw[black] (6.1,2.05) circle (2pt);
 \draw[red!50!blue,thick,dashed] (4,2.5) -- (2.52,2.58);
  \draw[red!50!blue,thick,dashed] (2.1,2.4) -- (2.52,2.58);
  \draw[red!50!blue,thick,dashed] (4,2.5) -- (3.5,2.9);
 \draw[red!50!yellow,thick,dashed] (4,2.5) -- (5.4,2.45);
 \draw[red!50!yellow,thick,dashed] (4,2.5) -- (4.4,2.57);
     \draw[red!50!yellow,thick,dashed] (4,2.5) -- (4.6,1.9);
 \filldraw[green!40!blue] (2.52,2.58) circle (2pt);
\filldraw[green!40!blue] (3.5,2.9) circle (2pt);
\filldraw[green!40!blue] (2.1,2.4) circle (2pt);
\filldraw[yellow!40!blue] (4.4,2.57) circle (2pt);
\filldraw[yellow!40!blue] (4.6,1.9) circle (2pt);
\filldraw[yellow!40!blue] (5.4,2.45) circle (2pt);
\filldraw[magenta] (4,2.5) circle (2.5pt);

\node[anchor=south] at(4,2.5) {$\mathbf{m}$};
\end{tikzpicture}
}
      \scalebox{0.515}{
 \begin{tikzpicture} 
\draw[thick] (0,0) -- (7,0) node[anchor=north west]{};
\draw[thick] (0,0) -- (0,4.03) node[anchor=south east]{};
\draw[thick] (0,4.03) -- (7,4.03) node[anchor=south east]{};
\draw[thick] (7,0) -- (7,4.03) node[anchor=south east]{};
  \filldraw[black] (3,0) rectangle(4,2.5);

\node[anchor=west] at(0,3.5) {$(\bf{D})$};
  \draw[red!60!blue,ultra thick] (5.5,1.5) -- (4.6,1.9);
 \draw[red!60!blue,ultra thick] (1.5,1.5) -- (2.52,2.58);
  \draw[red!60!blue,ultra thick] (4,2.5) -- (2.52,2.58);
  \draw[red!60!blue,ultra thick] (4,2.5) -- (4.6,1.9);
\filldraw[red] (5.5,1.5) circle (2.5pt);
\filldraw[green] (1.5,1.5) circle (2.5pt);
\filldraw[black] (2.6,0.8) circle (2pt);
\filldraw[black] (0.7,0.3) circle (2pt);
\filldraw[black] (1.65,0.65) circle (2pt);
\filldraw[black] (2.7,1.8) circle (2pt);
\filldraw[black] (2.1,2.4) circle (2pt);
\filldraw[black] (2.52,2.58) circle (2.5pt);
\filldraw[black] (0.4,2.6) circle (2pt);
\filldraw[black] (0.9,1.3) circle (2pt);
\filldraw[black] (1.1,2.1) circle (2pt);
\filldraw[black] (3.5,2.9) circle (2pt);
\filldraw[black] (4,2.5) circle (2.5pt);
\filldraw[black] (4.4,2.57) circle (2pt);
\filldraw[black] (4.6,1.9) circle (2.5pt);
\filldraw[black] (4.7,1.1) circle (2pt);
\filldraw[black] (4.75,0.3) circle (2pt);
\filldraw[black] (5.2,0.7) circle (2pt);
\filldraw[black] (6.2,0.6) circle (2pt);
\filldraw[black] (6.7,1.1) circle (2pt);
\filldraw[black] (5.4,2.45) circle (2pt);
\filldraw[black] (6.8,2.9) circle (2pt);
\filldraw[black] (6.1,2.05) circle (2pt);

\end{tikzpicture}
}
\caption{Demonstrating the Bidirectional Guidance Informed Trees (\algname). (A) Constructing a preliminary bidirectional motion tree via a bidirectional heuristic search with a new lazy strategy. (B) Acquiring estimated bounds for updating heuristic. (C) Updating heuristic with the estimated bounds from (B) using uniform-cost search. (D) Use a bidirectional search with the bidirectional guidance heuristic to find a valid path. 
\label{illustration}
\vspace{-7mm}
}
\end{figure}


However, graph-based search algorithms could incur discretization problems requiring increased computation (curse of dimensionality~\cite{bellman}) in high-dimensional continuous planning domains.

 Sampling-based motion planners, including PRM~\cite{PRM}, and RRTs~\cite{RRTs}, avoid the discretization problems of graph-search by randomly sampling states in a high dimensional continuous search space. RRT* and PRM*~\cite{rrtStar} find optimal solutions as the number of sampled states approaches to infinity. Informed sampling-based motion planning algorithm combining the advantages of anytime heuristic search and lazy search strategy have significantly improved the search performance~\cite{informedrrt,bit}. An accurate heuristic can substantially augment the anytime profile of a sampling-based motion planner, like AIT* and EIT*~\cite{EIT}. Consequently, we use AIT* and EIT* as our baseline. 
 
In this paper, we develop Bidirectional Guidance Informed Trees~(\algname), a novel informed sampling-based planner. \algname\ firstly utilizes a variant of MM algorithm to expeditiously traverse on a given RGG, comprised of a set of informed states, from $X_{Start}$ and $X_{Goal}$ simultaneously. During this search, \algname\ introduces a new lazy strategy: edges do not undergo complete edge collision checks, but an {\em intersected state} triggers sparse collision detection for the first batch samplings. Subsequently, \algname\ leverages uniform-cost search with estimated bounds that narrow down the informed search region to efficiently update the heuristic. Finally, the provided heuristic is used to direct a bidirectional search with collision checks for edges in finding a valid path.

\section{BACKGROUND}
 The optimal sampling-based planning problem is defined similarly as in~\cite{rrtStar,bit}.
  The $n$-dimensional continuous state space of a motion planning problem is represented by $X\subseteq \mathbb{R}^n$, $X_{obs} \subset X$ is defined as the set of all states that collide with fixed obstacles, and $X_{free} = X\setminus X_{obs}$ contains states in collision-free space. The start state is denoted by $\mathit{X_{start}}$ and the goal state by $\mathit{X_{goal}}$, where $\{X_{start},X_{goal}\} \subset X_{free}$. Let $\sigma:[0,1]\mapsto X_{free}$ be a collision-free path from $X_{start}$ to $X_{goal}$, consisting of a sequence of states $\sigma(\tau) \in X_{free}$, where $\forall \tau \in [0,1]$, $\sigma(0) = X_{start}$, and $\sigma(1) = X_{goal}$, and $\sum_{free}$ be the set of all paths from $X_{start}$ to $X_{goal}$. An optimal solution to the problem is to find the path with the minimal cost among  $\sum_{free}$, $\sigma^* = \argmin_{\sigma \in \sum_{free}}\{C(
  \sigma)\}$, where $C$ denotes the path cost of $\sigma$, and $C^*$ is the cost of the optimal path.  
  
   Heuristics can bias the sampling to reduce the search time and refine the path solution for a sampling-based motion planner. RRT-connect~\cite{rrtconect} promptly finds a feasible path by using the $\mathit{connect}$ $heuristic$ that contiguously tries to connect two trees, one rooted in $X_{start}$ and one in $X_{goal}$. Bidirectional WA* Extend ~\cite{WA_Extend} uses an extend operator to connect the searching frontiers as a greedy heuristic. These algorithms can result in efficiently finding a solution but have no guarantees on optimality. 
   
   In \cite{akgun}, a sampling heuristic is proposed using local biasing to improve the path quality, and better solutions are updated with the node rejection method based on the bidirectional RRT*. Informed RRT*~\cite{informedrrt}, an extension version of RRT*~\cite{rrtStar}, uses an ``ellipsoidal heuristic" to bias the sampling. It accelerates the convergence by increasingly sampling states that could provide a better solution. These planners achieve asymptotic optimality and prompt the convergence rate, but heuristics are not used to direct the search.
   
 Lazy collision checks on edges accelerate the convergence of a sampling-based motion planner by reducing the cumulatively computational cost.  Lazy-PRM* and Lazy-RRG*~\cite{LazyPRM*}, two anytime sampling-based asymptotically optimal motion planners, reduce the tremendous number of collision checks by finding a better candidate path to the goal firstly. If a path is found, each edge along the path is checked for collision. Collided edges are then removed from the roadmap. However, the heuristics are not employed to guide the search. 

 Incorporating informative heuristics into sampling-based motion planning can significantly improve the search efficiency.
 Batch Informed Trees (BIT*)~\cite{bit} combines the strengths of heuristic search and anytime sampling-based motion planning techniques to quickly find an initial solution using a batch of samples. Subsequently, BIT* continuously refines this solution with new batches of samples that could potentially improve the current solution. This refinement process continues until either further planning becomes unavailable, or the solution converges to an optimal one. Although BIT* uses an admissible heuristic to guide the search on a given RGG, it does not improve the heuristic's accuracy during the search. 
 
 The accuracy of a heuristic contributes to acceleration, and a perfect heuristic makes the search effort trivial.
 AIT* and EIT*~\cite{EIT}, asymmetric bidirectional optimal sampling-based motion planners (ABOPs), provide an adaptive heuristic by updating heuristic using a reverse search from a goal to a start. The accurate heuristic improves the anytime search performance. 
However, Section~\ref{inadmissible} details how the search efficiency of these planners might be compromised.

MM is the first Bi-HS algorithm that achieives \emph{meet-in-the-middle} property while maintaining optimality. Unfortunately, MM does not achieve competitive running time.

 \subsection{ Potential Disadvantages of ABOPs} \label{inadmissible}

In the domain depicted in Fig.~\ref{fig:enclosewithgap}, a goal is enclosed by a slender wall and a constricted gap. It renders challenges for sampling-based planners with heuristics. Within this setting, both start and goal states are placed in close proximity to the wall. This configuration gives rise to two primary difficulties: (1) it could have numerous invalid edges near the start and goal, and the intervening wall; (2) inaccurate heuristics derive the search into local minima, especially amplified by inadmissible heuristics~\cite{inadmissibleA*}. For ABOPs, if an invalid edge, the best one is selected from the forward edge queue, is detected, the heuristic is updated by the reverse search. This could be computationally intensive, especially in the case with the large number of invalid edges, especially for AIT*. Since EIT* might provide an inadmissible heuristic for the forward search, the search performance might be diminished.  

 
 EIT* \cite{EIT} adopts a lazy reverse search that employs adaptive sparse collision checks to provide admissible and inadmissible heuristics to conduct the forward search (Section 3.3.3 in \cite{EIT}). The reverse search begins when the estimated total path cost of the best edge is less than that of the best edge in the forward queue (Alg. 8, line 9- 24.~\cite{EIT}). When the states on the forward edge queue are with inadmissible heuristic values, this could lead to their total estimated path costs being inadmissible. As a result, the processing can incur a longer reverse search time. Moreover, this might lead edges with admissible heuristics on the forward edge queue, even if they are distant from the optimal path on the given RGG, to get prioritized for expansion because they seem ``cheaper" in comparison, requiring more collision checks.

 


 

 \section{Bidirectional Guidance Informed Trees}

\subsection{Notation}
   $\mathcal{F}$ and $\mathcal{B}$ represent the forward and backward search direction, respectively, while $E$ and $V$ denote the edge and vertex queues. 
   For simplicity, our primarily description concentrates on the forward search, with analogous details for the backward search. 
   Let $\mathbf{x}$ be a single state, $\mathbf{x} \in X_{free}$. $prt_\mathcal{F}(\mathbf{x})$ is the parent state of $\mathbf{x}$, respectively.  
   Let $\mathcal{T_F}:=(V_\mathcal{F},E_\mathcal{F})$ be an explicit forward search tree with a set of states, $\{V_\mathcal{F}\} \subset X_{free}$, and edges, $E_\mathcal{F} = (\mathbf{u}_{\mathcal{F}},\mathbf{v}_{\mathcal{F}}$),  where $\mathbf{u}_\mathcal{F}$ is the source state and  $\mathbf{v}_\mathcal{F}$  is the target state. A {\em priori} heuristic and the lower admissible bound denote the Euclidean Norm through this paper.
   
   The function $\widehat{g}_\mathcal{F}(\mathbf{x})$ denotes the admissible estimates of the $cost$-$to$-$come$ from $X_{start}$ to $\mathbf{x}$ through the graph. $g_{\mathcal{F}}(\mathbf{x})$ represents the true $cost$-$to$-$come$  given the current forward search tree.  $\overline{g}_{\mathcal{F}}(\mathbf{x})$ is the lower admissible $cost$-$to$-$come$ bound. $\widehat{c}(\mathbf{u},\mathbf{v})$ and $c(\mathbf{u},\mathbf{v})$ denote the admissible estimate and true edge cost of an edge between states $\{\mathbf{u},\mathbf{v}\} \in X_{free}$ respectively, where $\widehat{c}(\mathbf{u},\mathbf{v}) \leq c(\mathbf{u},\mathbf{v})$. 
   
   $\widehat{h}_\mathcal{F}(\mathbf{x})$ shows the admissible estimates of the $cost$-$to$-$go$ from $\mathbf{x}$ to $X_{goal}$. $h_{\mathcal{F}}(\mathbf{x})$ denotes the true $cost$-$to$-$go$. $\overline{h}_{\mathcal{F}}(\mathbf{x})$ represents the lower admissible $cost$-$to$-$go$ bound. $\widehat{f}_\mathcal{F}(\mathbf{x}):=\widehat{g}_\mathcal{F}(\mathbf{x}) + \widehat{h}_\mathcal{F}(\mathbf{x})$ denotes the admissible estimates of the total path cost from $X_{start}$ to $X_{goal}$ going through $\mathbf{x}$. The admissible estimate gives a subset of states, $X_{\widehat{f}}{ := \{ \mathbf{x} \in X_{free} | \widehat{f}_\mathcal{F}(\mathbf{x}) \leq C_{best} \}}$, that could improve the incumbent best path cost,~$C_{best}$. $\widehat{f}_{min{\mathcal{F}}}$, $\widehat{g}_{min\mathcal{F}}$, and $g_{min\mathcal{F}}$ represent the minimum $\widehat{f}_{\mathcal{F}}$, $\widehat{g}_{\mathcal{F}}$ and $g_\mathcal{F}$, respectively.
   
   Let $U_E$ represent the current true cost of the minimal path solution found to date. As an improved solution is found, these values are updated, continuing upon a specific stop condition is met. 
   For a state that is intersected by a bidirectional heuristic search from both forward and backward directions, we refer to it as an {\em intersecting state}, $\mathbf{x_C}$.
   The set of all intersecting sates is given by $\mathcal{M}$, where $\mathbf{x_M} \in \mathbf{x_C} \subseteq \mathcal{M}$.

      In light of the work in~\cite{mm}, it is readily to show that $\overline{g}_\mathcal{F}(\mathbf{x})\leq \widehat{g}_\mathcal{F}(\mathbf{x}) \leq g_\mathcal{F}(\mathbf{x})\leq \frac{C_{best}}{2}$. It is also noted that $\overline{h}_\mathcal{F}(\mathbf{x})\leq \widehat{h}_\mathcal{F}(\mathbf{x}) \leq h_\mathcal{F}(\mathbf{x})$, $h_\mathcal{F}(\mathbf{x}) = g_\mathcal{B}(\mathbf{x})$, $\widehat{h}_\mathcal{F}(\mathbf{x}) = \widehat{g}_\mathcal{B} (\mathbf{x})$, and $\overline{h}_\mathcal{F}(\mathbf{x}) = \overline{g}_\mathcal{B}(\mathbf{x})$.
      

\subsection{Neighbors' Connection}
Similar to BIT*~\cite{bit}, \algname\ incrementally approximates the search space, by adding $m$ informed states in each batch, potentially optimizing the current solution (Alg.~\ref{alg1}, line 9). These informed states can be connected with their neighbors that fall within a radius $r$ (-disc RGG) or that are $k$-nearest neighbors (Alg. \ref{alg1}, line 9) to build edge-implicit RGGs~\cite{informedrrt}. 

\subsection{MM-Variant}\label{cPRBMT}

 Since MM lacks efficiency in running time, we propose a new stopping condition in this paper. Once $U_E \leq (C_{{\mathcal{F},\mathcal{B}}} ~or ~\frac{prmin_\mathcal{F} + prmin_\mathcal{B}}{2})$, \algname terminates the search.

\subsection{Lazy-MM in First Batch Sampling} 
 \algname leverages the lazy-MM, omitting the collision for each edge, to find a candidate path (Alg~\ref{alg1}, Line10). Upon termination, \algname acquires estimated bounds by the neighbors of $\mathbf{x}_C$ and employs Dijkstra algorithm to update the heuristic values of the generated states whose f-value does not exceeds the estimated bounds in both search tree (Alg~\ref{alg1}, line 11 and Alg~\ref{alg2}). Following this, \algname starts to find a valid path. 
   
\subsection{Finding A Valid Path}

  For simplicity, we focus on the operation on the forward search, with analogous operations in the backward search. 
  \algname\ uses the MM-variant to find a valid path. It begins by expanding $X_{start}$ and $X_{goal}$ simultaneously (Alg~\ref{alg1}, line 11-12 and line 15-16). 
  Then \algname\ iteratively selects the best edge $(\mathbf{x},\mathbf{x'})$ from the forward edge queue $Q_\mathcal{F}$. Notably, if the heuristic value of an edge has already been updated, its f-values is refined by dividing the candidate path cost. If the edge is already in the forward motion tree, its target state is directly expanded (Alg~\ref{alg1}, line 22). Otherwise, if $\mathbf{x}$ can improve the current g-value for $\mathbf{x'}$, \algname\ checks its validity (Alg~\ref{alg1}, line 23). If the edge is valid, $\mathbf{x'}$ is added to the forward search tree if not already present (Alg~\ref{alg1}, line 25). Otherwise, the edge ($\mathbf{x},\mathbf{x'}$) is removed from the forward motion tree and the parent is rewired (Alg~\ref{alg1}, line 27-28). Then, $\mathbf{x'}$ is expanded. If the $\mathbf{x'}$ is already in the backward tree and provides a better solution, the incumbent solution $U_E$ is updated (Alg~\ref{alg1}, line 31-32).     

  This process repeats until a valid solution is found, after which a new batch of samples is added (Alg~\ref{alg1}, line 17 and line 35). Notably, the strategy for pruning states aligns with~\cite{bit}. Additionally, we utilize sparse collision detection, as detailed in~\cite{EIT}, for an intersecting state which can provide a better solution in the lazy-MM search.

 \begin{algorithm}
  \scriptsize
 $U_\mathbf{E} = {\infty}$;\\
 $E_\mathcal{F} = E_\mathcal{B} = \emptyset$;\\
 $X_{samples} \leftarrow{X_{goal} \cup X_{start}}$;\\
     $V_\mathcal{F} \leftarrow{X_{start}}$;  $V_\mathcal{B} \leftarrow{X_{goal}}$;\\
 $Q_\mathcal{F} \leftarrow{\emptyset}$; $Q_\mathcal{B} \leftarrow{ \emptyset}$;\\
 
\Repeat{Stopped}{
  $C_{{\mathcal{F},\mathcal{B}}}:= \mathbf{min}(prmin_\mathcal{F}, prmin_\mathcal{B})$;\\
  \If{\texttt{NeedNewStates}()}{ 
      $X_{samples} \xleftarrow{+}{sample(m,U_\mathbf{E})}$;\\
      \If{\texttt{NotThefirstBatch}}{
          $\texttt{ExpandedVertex}(X_{start})$;\\
    $\texttt{ExpandedVertex}(X_{goal})$;
      }
  }
  \If{$\texttt{$\texttt{TheFirstBatch}~\text{and}~\texttt{LazySearchContinue()}$}$}{
      
  $\texttt{UpdatingHeuristic}(\mathbf{x}_\mathcal{M})$;\\
    $\texttt{ExpandedVertex}(X_{start})$;\\
    $\texttt{ExpandedVertex}(X_{goal})$;
      
  }
  \uElseIf{$U_{\mathbf{E}} > (C_{{\mathcal{F},\mathcal{B}}} ~or ~\frac{prmin_\mathcal{F} + prmin_\mathcal{B}}{2} )$}{
    \eIf{$C_{{\mathcal{F},\mathcal{B}}} = prmin_\mathcal{F}$}{
      
   $(\mathbf{x},\mathbf{x'})  \xleftarrow{-} Q_{\mathcal{F}}$;\\
         
         \uIf{$(\mathbf{x},\mathbf{x'})\in \mathbf{E}_\mathcal{F}$}{   $\texttt{ExpandedVertex}(\mathbf{x'})$;}{
         \uElseIf{$g_\mathcal{F}(\mathbf{x}) + \widehat{c}(\mathbf{x},\mathbf{x'}) < g_\mathcal{F}(\mathbf{x'})$}{
          \If{$\texttt{EdgeValid}(\mathbf{x},\mathbf{x'})$}{
           
              \eIf{$\mathbf{x'} \notin V_\mathcal{F}$}{
                 $V_\mathcal{F} \xleftarrow{+}\mathbf{x'} $;
              }{
                 $E_\mathcal{F} \xleftarrow{-} (prt_\mathcal{F}(\mathbf{x'}), \mathbf{x'})$;
              }
              $prt_\mathcal{F}(\mathbf{x'}) \leftarrow{\mathbf{x}}$,
              $E_\mathcal{F} \xleftarrow{+}(\mathbf{x},\mathbf{x'})$;\\
              $\texttt{ExpandVertex}(\mathbf{x'})$;\\
              $g_\mathcal{F}(\mathbf{x'}) =g_\mathcal{F}(\mathbf{x}) + c(\mathbf{x},\mathbf{x'})$;\\
              \If{$\mathbf{x'} \in V_{\mathcal{B}}$}{
                 $U_\mathbf{E} = \mathbf{min}(U_\mathbf{E},g_\mathcal{F}(\mathbf{x'}) + g_\mathcal{B}(\mathbf{x'}))$;\\
              }
             }

         }
         }
    }{
      // searching in the backward motion tree, analogously;
    }
  }
  \Else{   
     $Prune()$;
     $Q_\mathcal{F} \leftarrow Q_\mathcal{B} \leftarrow \emptyset$;\\
  }
}

\caption{$BIGIT^*(X_{start},X_{goal})$}
\label{alg1}
\end{algorithm}

\setlength{\textfloatsep}{0pt}
 \begin{algorithm}
   \scriptsize
// Updating heuristic to start in forward preliminary motion tree;\\
    

 $\widehat{f}_{\mathcal{B}Max}:= \argmax_{\mathbf{x'} \in \texttt{nbrs}(\mathbf{x})~\text{and}~prt_\mathcal{F}(\mathbf{x'}) \neq \emptyset}(\widehat{g}_\mathcal{B}(\mathbf{x'}) + \overline{h}_\mathcal{B}(\mathbf{x'}))$;\\ 

 $\texttt{Dijkstra}(\mathbf{x})$;\\
 //  The operation in the backward tree is analogous;    

             
           
    
\caption{$\texttt{UpdatingHeuristic}(\mathbf{x})$}
\label{alg2}

\end{algorithm}

\setlength{\floatsep}{0pt}

\setlength{\floatsep}{0.1cm}
 \begin{algorithm}
  \scriptsize
 // Searching in the forward motion tree;\\
    \ForEach{$\mathbf{x'}\in \texttt{nbrs}(\mathbf{x})$}{
      \If{$\widehat{g}_\mathcal{F}(\mathbf{x'}) > g_\mathcal{F}(\mathbf{x}) + \widehat{c}(\mathbf{x},\mathbf{x'})$}{
         \If{$g_\mathcal{F}(\mathbf{x}) + \widehat{c}(\mathbf{x},\mathbf{x'}) + \widehat{h}_\mathcal{F}(\mathbf{x'}) \leq U_\mathbf{E}~\text{or}~ g_\mathcal{F}(\mathbf{x}) \leq \frac{U_E}{2} $}
         {$Q_\mathcal{F} \xleftarrow{+} \{\mathbf{x},\mathbf{x'}\}$;}
          
      }
    }
//  The searching in the backward tree is analogous;    
\caption{$\texttt{ExpandVertex}(\mathbf{x})$}
\label{alg4}
\end{algorithm}

\section{Analysis}\label{proof}
In this section, we present the proofs for the admissibility of \algname\ ($Theorem~\ref{t1}$), and its asymptotic optimality and probabilistic completeness ($Theorem~\ref{t2}$).  
 Following a similar line in MM~\cite{mm}, \algname\ guarantees $meet~in~the~middle$ when the search stops (Alg.~\ref{alg1}, line 17). Namely, each meeting state is with the lowest $cost$-$to$-$come$ from the forward and backward search trees.
 \begin{theorem}\label{t1}
 \algname\ provides an admissible heuristic on a given RGG.
\end{theorem}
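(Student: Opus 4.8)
The plan is to show that the heuristic values produced by \algname\ never overestimate the true \emph{cost-to-go} on the given RGG, so that the subsequent bidirectional search is guided by an admissible heuristic. The heuristic is assigned in two stages: the \emph{priori} Euclidean heuristic $\widehat h_\mathcal{F}$ (respectively $\widehat g_\mathcal{B}$) that every generated state starts with, and the refined value computed in \texttt{UpdatingHeuristic} (Alg.~\ref{alg2}) via a Dijkstra pass seeded from the intersecting states $\mathbf{x}_\mathcal{M}$ using the bound $\widehat f_{\mathcal{B}Max}$. So I would split the argument accordingly: first note that the Euclidean norm is, by assumption, a lower bound on every edge cost and hence (by the triangle inequality / path-additivity) on the true cost-to-go for any untouched state; then argue that the Dijkstra-refined values also remain lower bounds.

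For the refined stage, the key step is to verify that the seed value used at each intersecting state $\mathbf{x}\in\mathbf{x}_\mathcal{M}$ — namely $\widehat f_{\mathcal{B}Max}=\max_{\mathbf{x'}\in\texttt{nbrs}(\mathbf{x}),\,prt_\mathcal{F}(\mathbf{x'})\neq\emptyset}\bigl(\widehat g_\mathcal{B}(\mathbf{x'})+\overline h_\mathcal{B}(\mathbf{x'})\bigr)$ — is itself an admissible estimate of the cost-to-go from $\mathbf{x}$ on the RGG. Here I would use the relations collected at the end of the Notation subsection, $\overline g_\mathcal{F}(\mathbf{x})\le\widehat g_\mathcal{F}(\mathbf{x})\le g_\mathcal{F}(\mathbf{x})\le \tfrac{C_{best}}{2}$ together with $\widehat h_\mathcal{F}=\widehat g_\mathcal{B}$, $\overline h_\mathcal{F}=\overline g_\mathcal{B}$, $h_\mathcal{F}=g_\mathcal{B}$, plus $\widehat c\le c$. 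Because $\widehat g_\mathcal{B}$ and $\overline h_\mathcal{B}$ are each admissible (lower) estimates of the corresponding exact quantities, their sum lower-bounds the true total cost of any path through $\mathbf{x'}$ in the backward direction, and since the maximization is taken only over already-connected neighbors, the resulting seed cannot exceed the true cost-to-go realized through the preliminary tree; the \emph{meet-in-the-middle} stopping condition (each meeting state has minimal cost-to-come from both roots) is what licenses treating these seeds as valid boundary values rather than arbitrary ones. Then, since Dijkstra propagates these seeds outward by adding only admissible edge estimates $\widehat c(\mathbf{u},\mathbf{v})\le c(\mathbf{u},\mathbf{v})$, an easy induction on Dijkstra's settling order shows every refined $\widehat h_\mathcal{F}(\mathbf{x})$ stays $\le h_\mathcal{F}(\mathbf{x})$ on the RGG. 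Finally, for states whose refined value is never written (those outside the estimated-bound region), the original Euclidean heuristic is retained, and it is admissible by hypothesis; for a state that \emph{is} refined, one checks the stored value is the minimum of the Euclidean value and the Dijkstra value, so admissibility is preserved by taking a minimum of two admissible quantities.

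I would organize the write-up as: (i) a lemma that the Euclidean-norm \emph{priori} heuristic is admissible on the RGG (immediate from $\widehat c\le c$ and additivity of path cost); (ii) a lemma that the Dijkstra seeds $\widehat f_{\mathcal{B}Max}$ at intersecting states are admissible cost-to-go estimates, leaning on the $\overline g\le\widehat g\le g$ chain and the meet-in-the-middle property established just before the theorem; (iii) an induction over Dijkstra's settled-vertex sequence showing propagation preserves the lower-bound property; (iv) the conclusion that every state's final heuristic value is a minimum of admissible values, hence admissible. The main obstacle I anticipate is step (ii): carefully justifying that maximizing over connected neighbors of an intersecting state yields a quantity that is still below the \emph{true} cost-to-go on the full RGG — one has to rule out that the neighbor set used is too optimistic, which is exactly where the lazy-MM termination guarantee (that $\mathbf{x}_\mathcal{M}$ states sit at minimal cost-to-come from both directions, so no cheaper meeting exists to undercut the bound) must be invoked; a secondary subtlety is confirming the f-value filter $\widehat f_\mathcal{F}(\mathbf{x})\le C_{best}$ used to decide which states get refined does not cause an inadmissible value to be stored on a pruned-but-later-revisited state.
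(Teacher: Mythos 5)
Your proposal follows essentially the same route as the paper: both arguments reduce admissibility to (a) showing that the backward cost-to-come values attached to the meeting state and its tree-connected neighbors are minimal/admissible by appeal to the meet-in-the-middle stop condition (the paper's Case~1/Case~2 split over $\mathbf{s'}\in\mathcal{M}$ versus $\mathbf{s'}\notin\mathcal{M}$), and (b) observing that the subsequent uniform-cost (Dijkstra) propagation with underestimating edge costs $\widehat{c}\leq c$ preserves the lower-bound property, which your induction on the settling order makes explicit. One small correction: in Alg.~\ref{alg2}, $\widehat{f}_{\mathcal{B}Max}$ is the cutoff deciding \emph{which} generated states get refined (the paper requires $\widehat{f}(\mathbf{s''})\leq\widehat{f}_{\mathcal{B}Max}$), not the seed value of the Dijkstra pass --- the seeds are the $\widehat{g}_\mathcal{B}$ values at the meeting state's neighbors, which is exactly the quantity your step~(ii) and the paper's two cases certify, so the argument is unaffected.
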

\begin{proof}
We illustrate the case for states in the forward preliminary search tree; an analogous proof covers those in the backward tree.
  Let $\mathbf{s}$ be the $meeting~state$ triggering the stop condition, and $\mathbf{s'}$  falling within a radius $r$ or a $k$-nearest neighbor of $\mathbf{s}$, be on the forward search tree. 
     There are two cases to be discussed:
     
    Case1: $\mathbf{s'} \in \mathcal{M}$. Since $\widehat{g}_\mathcal{F}(\mathbf{s'}) + \widehat{g}_\mathcal{B}(\mathbf{s'})$ denotes the minimal total cost found so far, $\widehat{g}_\mathcal{B}(\mathbf{s'})$ is the lowest admissible $cost$-$to$-$come$ from the goal state for $\mathbf{s'}$. 
    
    Case2: $\mathbf{s'} \notin \mathcal{M}$. When $\mathbf{s}$ satisfies the stop condition, $\widehat{g}_\mathcal{B}(\mathbf{s})$ is with the lowest admissible $cost$-$to$-$go$ in backward preliminary motion tree for all generated states neighboring $\mathbf{s}$ in the forward preliminary motion tree. If this were not the case, $\mathbf{s}$ would not fulfill the stop condition. Thus, $\widehat{g}_\mathcal{B}(\mathbf{s'}) = \widehat{g}_\mathcal{B}(\mathbf{s}) + \widehat{c}(\mathbf{s}, \mathbf{s'})$ is the lowest $cost$-$to$-$come$ from the goal state for $\mathbf{s'}$.

    Therefore, when $\mathbf{s'}$ is inserted into the backward vertex queue with $\widehat{g}_\mathcal{B}(\mathbf{s'})$, $\widehat{g}_\mathcal{B}(\mathbf{s'})$ is the lowest estimated $cost$-$to$-$come$ from the goal.
    
    Subsequently, \algname\ iteratively selects a generated state, $\mathbf{s''}$, in the forward preliminary motion tree, for expansion via uniform-cost search strategy. $\mathbf{s''}$ follows $\widehat{f}(\mathbf{s''}) \leq \widehat{f}_{\mathcal{B}Max}$ when it is generated. The search terminates when $X_{Start}$ is selected for expansion. As a result, $\widehat{g}(\mathbf{s''})$ is with the lowest $cost$-$to$-$come$ from the goal state, which implies that $\widehat{g}(\mathbf{s''})$ is admissible ).
\end{proof}

\begin{theorem}\label{t2}
  \algname\ is asymptotically optimal and probabilistically complete.   
\end{theorem}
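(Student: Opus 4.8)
The plan is to establish asymptotic optimality and probabilistic completeness of \algname\ by reducing both properties to the corresponding guarantees already known for BIT*~\cite{bit}, and then arguing that the modifications introduced by \algname\ (the bidirectional MM-variant search, the lazy first-batch strategy, and the heuristic-updating phase) neither discard states that the optimality argument needs nor prevent the algorithm from eventually exploring the full RGG. First I would fix the structure: the algorithm proceeds in batches, each batch adds $m$ new informed samples to $X_{samples}$, and the RGG connection radius (or $k$-nearest rule) is chosen exactly as in BIT*/Informed RRT* so that the sequence of RGGs almost surely contains a path whose cost converges to $C^*$ as the number of samples grows. This is the backbone fact I would invoke rather than reprove.

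Next I would handle probabilistic completeness. The key observation is that the lazy-MM phase runs \emph{only in the first batch} (Alg.~\ref{alg1}, line~11 guarded by \texttt{TheFirstBatch}), and that afterwards \algname\ runs the MM-variant with genuine edge collision checks (\texttt{EdgeValid}, Alg.~\ref{alg1}, line~23); invalid edges are removed and parents rewired (lines~27--28), so no invalid edge is ever committed to a returned path. I would argue that the MM-variant with the new stopping condition ($U_E \leq \min(C_{\mathcal{F},\mathcal{B}},\ \tfrac{prmin_\mathcal{F}+prmin_\mathcal{B}}{2})$) terminates each batch in finite time because the edge and vertex queues over a finite RGG are finite and every edge is processed at most a bounded number of times, and that if a feasible path exists in the current RGG the bidirectional search will find one before the stop condition forces termination (here I would lean on the meet-in-the-middle correctness already cited from MM~\cite{mm} and restated before Theorem~\ref{t1}). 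Combining this with the standard result that the RGG sequence almost surely eventually contains a feasible path gives probabilistic completeness.

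For asymptotic optimality I would show that the pruning in \algname\ (the \texttt{Prune}() call on the \texttt{Else} branch, and the edge-admission tests in \texttt{ExpandVertex}, Alg.~\ref{alg4}, line~3) only ever removes states $\mathbf{x}$ with $\widehat{f}(\mathbf{x}) > U_E$, i.e. states that provably cannot lie on a path cheaper than the incumbent, and that the admissibility of the guidance heuristic (Theorem~\ref{t1}) guarantees $\widehat{f}$ underestimates true path cost on the RGG, so no state lying on an optimal-in-the-RGG path is ever discarded. Since the pruning criterion matches BIT*'s exactly (the excerpt explicitly says ``the strategy for pruning states aligns with~\cite{bit}''), and since processing an edge in the heuristic-updated regime only rescales its $f$-value by the candidate path cost (a monotone reordering that does not change which edges are eligible), the per-batch search of \algname\ explores at least the same relevant subgraph that BIT* would. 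Hence the incumbent cost $U_E$ after processing the $k$-th batch is no larger than the optimal cost on the $k$-th RGG, and letting $k\to\infty$ and invoking the RGG convergence fact yields $U_E \to C^*$ almost surely.

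The main obstacle I expect is the termination/correctness argument for the MM-variant with its \emph{new} stopping condition: unlike plain MM, \algname\ stops as soon as $U_E$ drops to $\tfrac{prmin_\mathcal{F}+prmin_\mathcal{B}}{2}$, and I must verify that this never causes the search to halt \emph{before} a feasible path in the current RGG has been committed, and that when it does halt the returned solution is optimal on that RGG (so that the batch-wise bound $U_E \le C^*_k$ holds). A secondary subtlety is the interaction between the first-batch lazy search and later batches: I need to confirm that any heuristic values written during \texttt{UpdatingHeuristic} remain admissible for \emph{all subsequent} RGGs (not just the first), which follows because the updated $\widehat{g}$ values are computed from the \emph{a priori} Euclidean lower bound via Dijkstra and Theorem~\ref{t1}, and adding more samples can only create new, equally-or-more-admissible estimates — but this monotonicity point must be stated carefully to avoid a circularity with the optimality proof.
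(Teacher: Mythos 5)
Your plan follows essentially the same route as the paper's proof: both rest on the admissibility of the guidance heuristic (Theorem~\ref{t1}) together with the MM-style stopping condition~\cite{mm} to conclude that each batch terminates with the best cost on the current RGG, and then pass to the limit via the standard RGG convergence guarantees inherited from BIT*~\cite{bit}. The paper's version is a three-sentence sketch that simply asserts these steps (and derives probabilistic completeness from asymptotic optimality rather than proving it separately), whereas your proposal fills in the reduction explicitly and correctly flags the two points the paper leaves unaddressed — termination and correctness under the new stop condition, and cross-batch admissibility of the updated heuristic.
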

\begin{proof}
   The provided admissible heuristic ($Theorem$~\ref{t1}) guides a bidirectional search on a given RGG to find a valid path. The stop condition align with that detailed in MM~\cite{mm}. As a consequence, the search terminates with $C_{best}$. This indicates that \algname\ converges to an optimal solution as $q$ goes to infinity, i.e.,
      \begin{align*}
        P(\lim_{q\to\infty}sup C^{BIGIT^*}_{best,q} = C^*) = 1,
   \end{align*}
 which implies that \algname\ is probabilistically complete. 
\end{proof}





\section{Experimental Results}

  \begin{figure*}[h]
  \vspace{-2mm}
    \centering
    \subfloat[Wall Gap]{
    \scalebox{0.33}{
       \begin{tikzpicture}  
\draw[ultra thick] (0,0) -- (10,0) node[anchor=north west]{};
\draw[ultra thick] (0,0) -- (0,10) node[anchor=south east]{};
\draw[ultra thick] (0,10) -- (10,10) node[anchor=south east]{};
\draw[ultra thick] (10,0) -- (10,10) node[anchor=south east]{};

\filldraw[gray, ultra thick](4,0) rectangle (6,5.8);
\filldraw[gray, ultra thick](4,6.2) rectangle (6,8);
\filldraw[gray, ultra thick](4,0) rectangle (6,5.8);

\draw[ultra thick] (0,10) -- (10,10) node[anchor=south east]{};
\draw[ultra thick] (10,0) -- (10,10) node[anchor=south east]{};
\draw[ultra thick] (10.2,0) -- (10.4,0) node[anchor=south east]{};
\draw[ultra thick] (10.2,10) -- (10.4,10) node[anchor=south east]{};
\draw[thick,dashed] (10.3,0) -- (10.3,5) node[anchor=south east]{};
\draw[thick,dashed] (10.3,0) -- (10.3,4.8) node[anchor=south east]{};
\draw[thick,dashed] (10.3,5.2) -- (10.3,10) node[anchor=south east]{};
\draw[thick,dashed] (0,-0.2) -- (4.8,-0.2) node[anchor=south east]{};
\draw[thick,dashed] (5.2,-0.20) -- (10,-0.20) node[anchor=south east]{};
\node[anchor=east] at(5.3,-0.2) {\scalebox{0.9}{$(1)$}};
\draw[thick,dashed] (0,8.2) -- (4,8.2) node[anchor=south east]{};
\draw[ultra thick] (4,8.2) -- (4,8) node[anchor=south east]{};
\node[anchor=south] at(2,8.2) {\scalebox{0.9}{$0.4$}};

\draw[thick,dashed] (4,8.2) -- (6,8.2) node[anchor=south east]{};
\draw[ultra thick] (6,8.2) -- (6,8) node[anchor=south east]{};
\node[anchor=south] at(5,8.2) {\scalebox{0.9}{$0.2$}};

\draw[thick,dashed] (6.2,0) -- (6.2,5.8) node[anchor=south east]{};
\draw[ultra thick] (6,5.8) -- (6.4,5.8) node[anchor=south east]{};
\node[anchor=west] at(6.2,3) {\scalebox{0.9}{$0.58$}};

\draw[thick,dashed] (4,5.8) -- (4,6.2) node[anchor=south east]{};
\draw[ultra thick] (4,5.8) -- (3.8,5.8) node[anchor=south east]{};
\draw[ultra thick] (4,6.2) -- (3.8,6.2) node[anchor=south east]{};
\node[anchor=east] at(3.9,6.0) {\scalebox{0.9}{$0.04$}};

\draw[thick,dashed] (6.2,8) -- (6.2,10) node[anchor=south east]{};
\draw[ultra thick] (6,8) -- (6.4,8) node[anchor=south east]{};
\node[anchor=west] at(6.2,9) {\scalebox{0.9}{$0.2$}};

\draw[ultra thick] (0,-0.1) -- (0,-0.3) node[anchor=south east]{};
\draw[ultra thick] (10,-0.1) -- (10,-0.3) node[anchor=south east]{};
\filldraw[green] (2,5) circle(2pt);
\node[anchor=north] at(10.3,5.2) {\scalebox{0.9}{$(1)$}};
\node[anchor=north] at(2,5) {\scalebox{0.9}{$(0.2,0.5)$}};
\node[anchor=north] at(8,5) {\scalebox{0.9}{$(0.8,0.5)$}};
\filldraw[red] (8,5) circle(2pt); 
\end{tikzpicture}   \label{fig:gap}
}
}
    \subfloat[Goal Enclosure with Wall Gap]{
    \scalebox{0.33}{
\begin{tikzpicture}  
   
\draw[ultra thick] (0,0) -- (10,0) node[anchor=north west]{};
\draw[ultra thick] (0,0) -- (0,10) node[anchor=south east]{};
\draw[ultra thick] (0,10) -- (10,10) node[anchor=south east]{};
\draw[ultra thick] (10,0) -- (10,10) node[anchor=south east]{};
\filldraw[gray, thick](3,2) rectangle (7.5,2.2);
\filldraw[gray, thick](3,7.8) rectangle (7.5,8);
\filldraw[gray, thick](3,2) rectangle (3.2,5.8);
\filldraw[gray, thick](3,5.9) rectangle (3.2,7.8);
\filldraw[green] (2.9,5) circle(2pt);

\draw[ultra thick] (10.2,0) -- (10.4,0) node[anchor=south east]{};
\draw[ultra thick] (10.2,10) -- (10.4,10) node[anchor=south east]{};
\draw[thick,dashed] (3.3,5.8) -- (3.3,5.9) node[anchor=south east]{};
\draw[ultra thick] (3.2,5.8) -- (3.4,5.8) node[anchor=south east]{};
\draw[ultra thick] (3.2,5.9) -- (3.4,5.9) node[anchor=south east]{};
\node[anchor=west] at(3.4,5.85) {\scalebox{0.9}{$0.01$}};

\draw[thick,dashed] (0,2) -- (3,2) node[anchor=south east]{};
\draw[ultra thick] (3,2) -- (3,1.8) node[anchor=south east]{};
\draw[thick,dashed] (3,0) -- (3,2) node[anchor=south east]{};
\node[anchor=south] at(1.5,2) {\scalebox{0.9}{$0.3$}};
\node[anchor=west] at(3,1) {\scalebox{0.9}{$0.2$}};
\draw[thick,dashed] (7.5,1.9) -- (3,1.9) node[anchor=south east]{};
\draw[ultra thick] (7.5,2) -- (7.5,1.8) node[anchor=south east]{};
\node[anchor=north] at(5.5,1.8) {\scalebox{0.9}{$0.45$}};

\draw[thick,dashed] (3.2,7.7) -- (7.5,7.7) node[anchor=south east]{};
\draw[ultra thick] (7.5,7.8) -- (7.5,7.6) node[anchor=south east]{};
\draw[ultra thick] (3.2,7.8) -- (3.2,7.6) node[anchor=south east]{};
\node[anchor=north] at(5.5,7.7) {\scalebox{0.9}{$0.43$}};

\draw[thick,dashed] (7.6,2) -- (7.6,2.2) node[anchor=south east]{};
\draw[ultra thick] (7.5,2) -- (7.7,2) node[anchor=south east]{};
\draw[ultra thick] (7.5,2.2) -- (7.7,2.2) node[anchor=south east]{};
\node[anchor=west] at(7.7,2.1) {\scalebox{0.9}{$0.02$}};

\draw[thick,dashed] (7.6,7.8) -- (7.6,8) node[anchor=south east]{};
\draw[ultra thick] (7.5,7.8) -- (7.7,7.8) node[anchor=south east]{};
\draw[ultra thick] (7.5,8) -- (7.7,8) node[anchor=south east]{};
\node[anchor=west] at(7.7,7.9) {\scalebox{0.9}{$0.02$}};

\draw[thick,dashed] (5.2,-0.20) -- (10,-0.20) node[anchor=south east]{};
\draw[thick,dashed] (0,-0.2) -- (4.8,-0.2) node[anchor=south east]{};
\node[anchor=east] at(5.3,-0.2) {\scalebox{0.9}{$(1)$}};
\draw[thick,dashed] (10.3,0) -- (10.3,4.8) node[anchor=south east]{};
\draw[thick,dashed] (10.3,5.2) -- (10.3,10) node[anchor=south east]{};
\node[anchor=north] at(10.3,5.2) {\scalebox{0.9}{$(1)$}};
\node[anchor=east] at(2.9,5) {$(0.29,0.5)$};
\node[anchor=west] at(3.6,5) {$(0.36,0.5)$};
\filldraw[red] (3.6,5) circle(2pt);
\draw[ultra thick] (0,-0.1) -- (0,-0.3) node[anchor=south east]{};
\draw[ultra thick] (10,-0.1) -- (10,-0.3) node[anchor=south east]{};
\end{tikzpicture} \label{fig:enclosewithgap}
}
} 
       \subfloat[A campus map]{
    \scalebox{1}{    \includegraphics[height=3.5cm,width=42mm,scale=1]{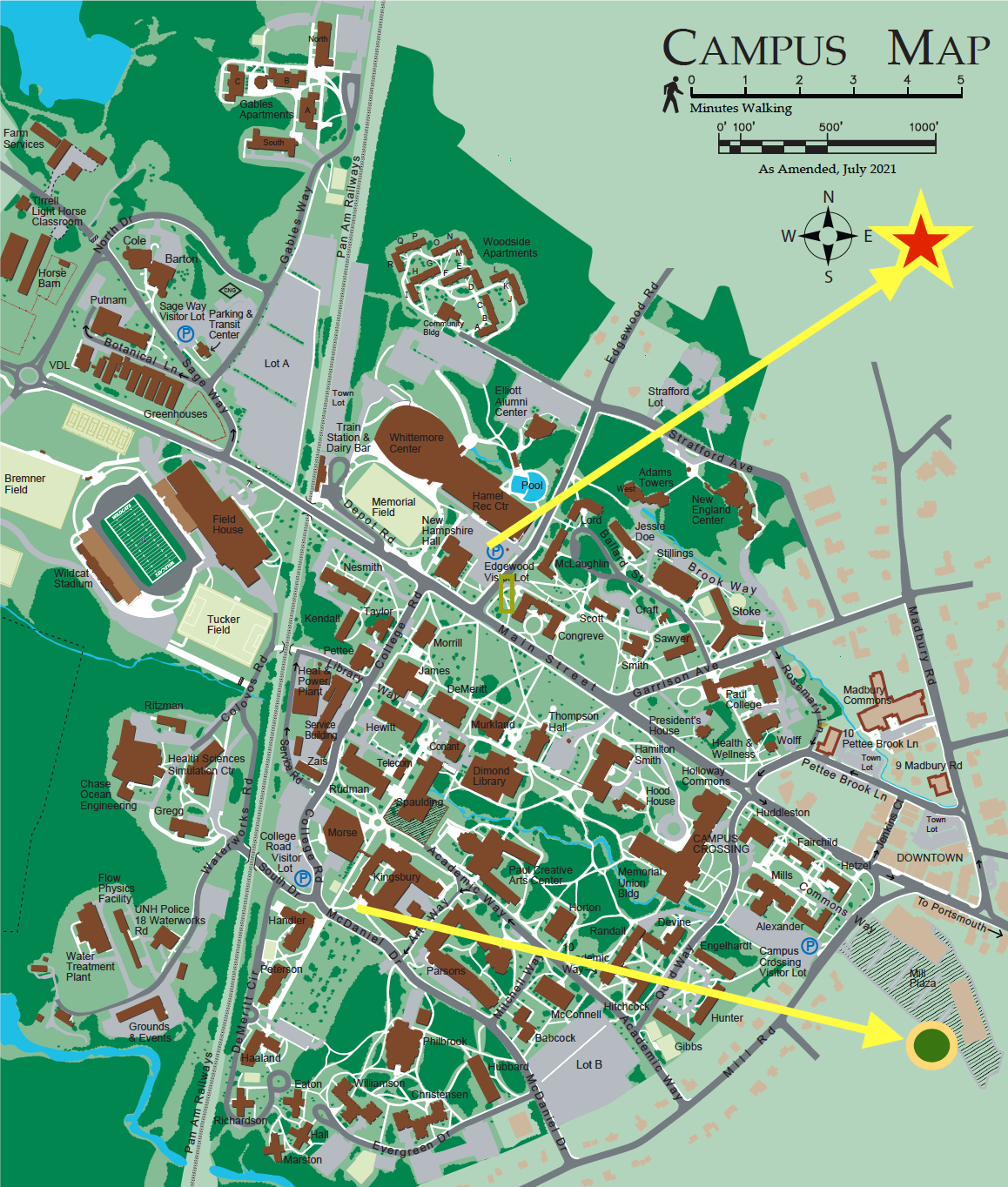}}\label{unhcm} }

\caption{  Experimental environments: abstract problems in (a) and (b); real-world scenarios in (c).} 
    \label{fig:TestDomain}
\end{figure*}

 \begin{figure*}
    \centering
    \subfloat[Wall Gap ($\mathbb{R}^{16}$)]{
        \scalebox{0.32}{\includegraphics[width= \textwidth]{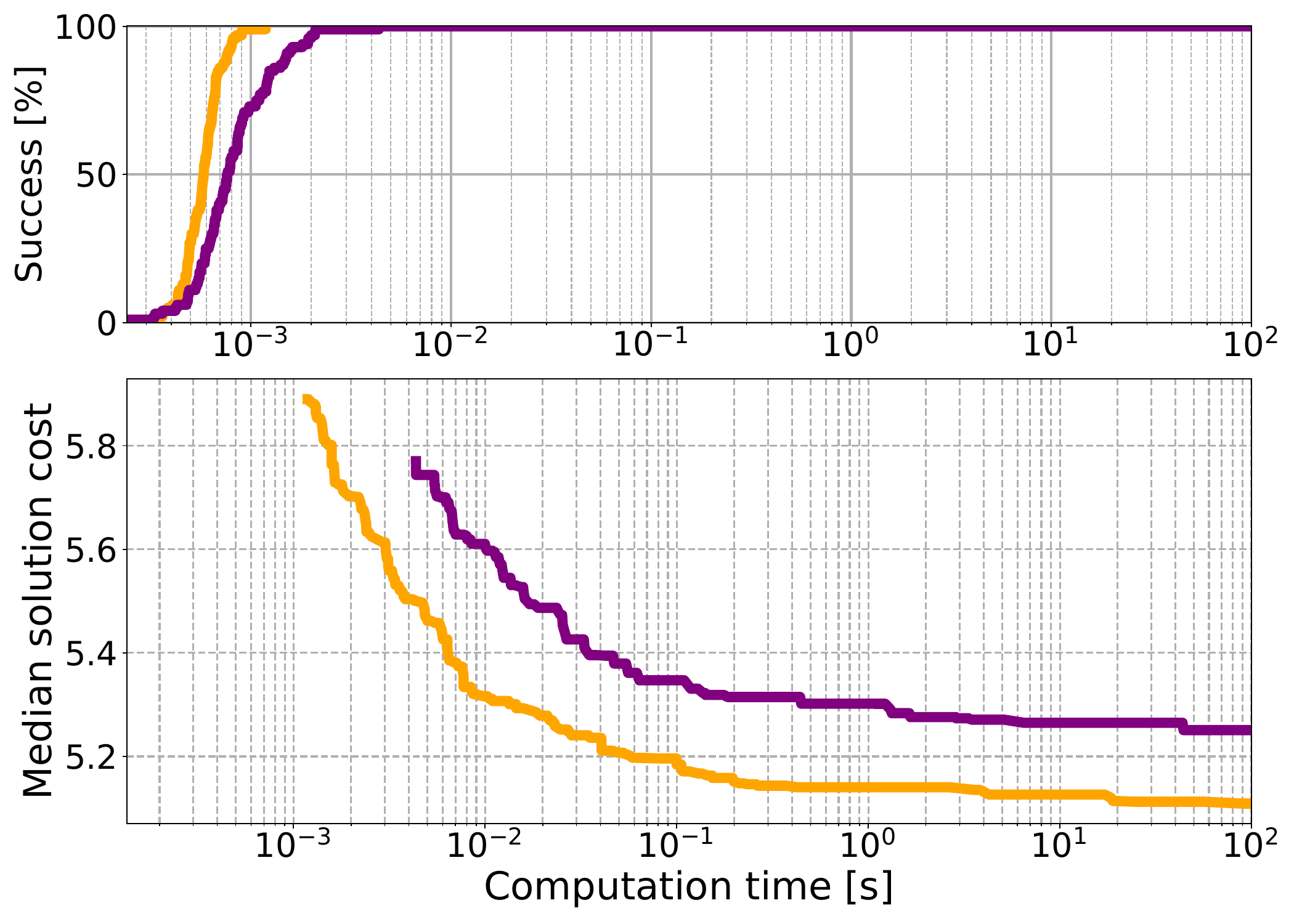} \label{fig:wall} }
    }
        \subfloat[Goal Enclosure with Wall Gap ($\mathbb{R}^{16}$)]{
        \scalebox{0.32}{\includegraphics[width= \textwidth]{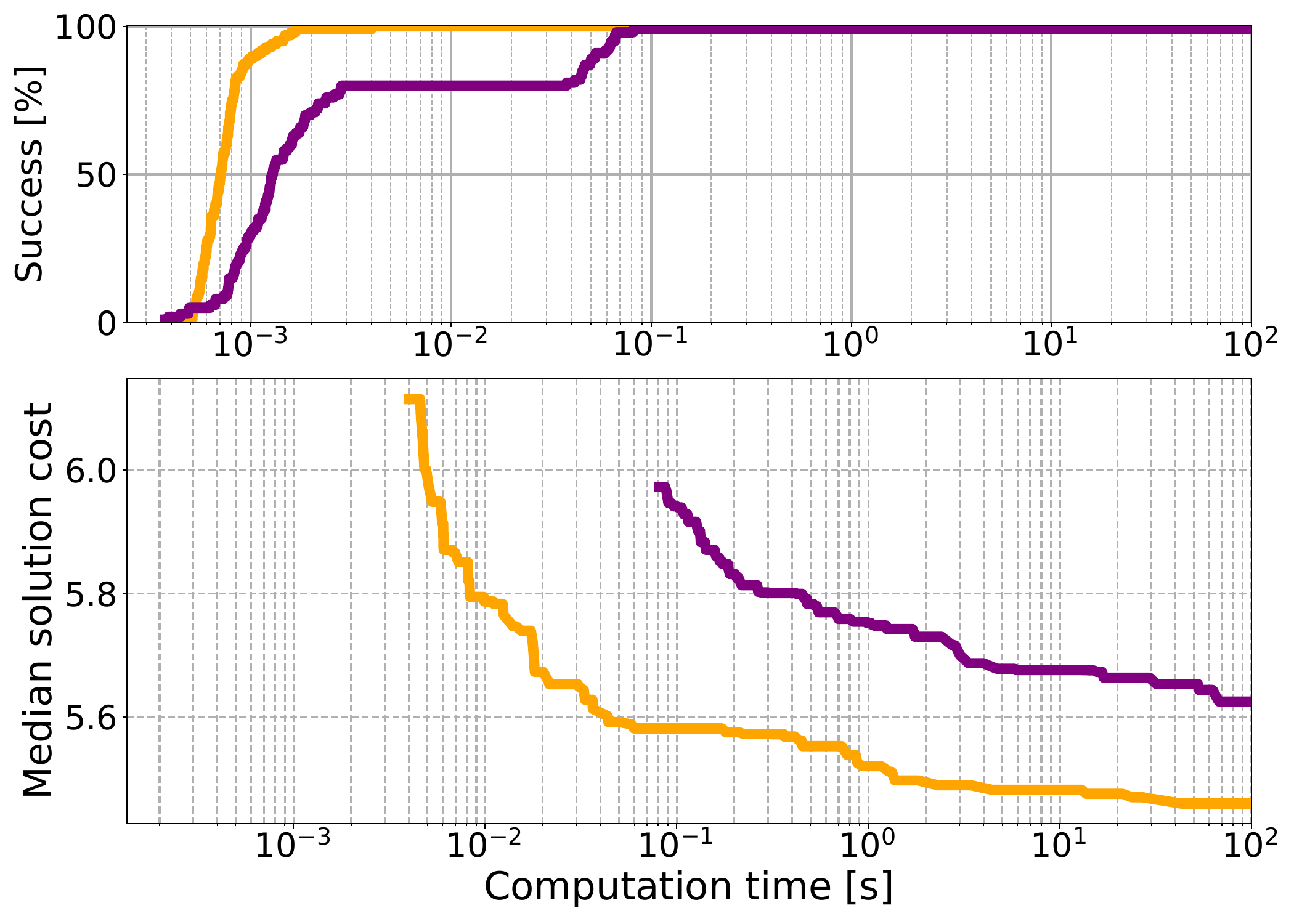} \label{fig:ex2} }
    }
    \subfloat[Campus Map ($SE(2)$)]{
        \scalebox{0.32}{\includegraphics[width= \textwidth]{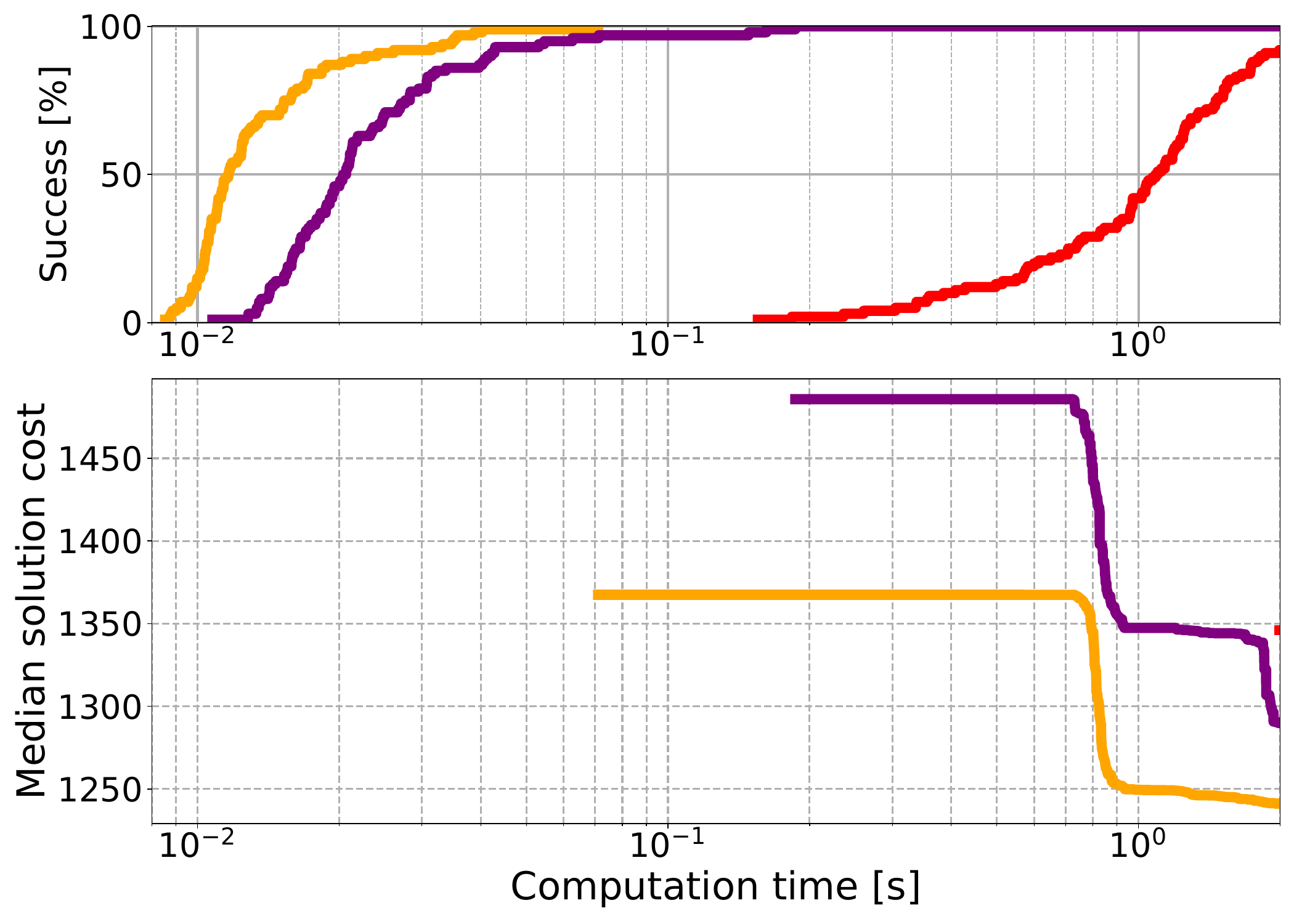} \label{unhat}}
    }
  
    \caption{Comparative performance of BIGIT* and EIT* across different domains in terms of median solution cost and success rate over time, with 95$\%$ confidence interval in each plot. \textcolor{orange}{$\thickrelbar$} and \textcolor{darkpurple}{$\thickrelbar$} denote \algname and EIT*, respectively. }
     
\end{figure*}



We implement \algname\ in Open Motion Planning Library (OMPL)~\cite{sucan2012open} and compare its performance with other informed sampling-based motion planners in OMPL, specifically EIT* and AIT*. The evaluation is conducted on abstract problems in $\mathbb{R}^{16}$, as well as on a Campus Map. All planners use the neighbors-connection strategy with $\eta = 1.001$ and 200 line segments for collision checks between any two states. Notably, solution quality remained consistent even when increasing the number of line-segment checks to 10,000. Notably, $k$-NN is used for two abstract problems while $r$-disc for campus map. 


All planners are run on a desktop with 16GB of RAM and Intel I7-7700k CPU running Ubuntu 18.04 and are implemented in C++. 
\subsection{Planning for Abstract Problems}

Two-dimensional test domains illustrations, in Fig~\ref{fig:gap} and Fig~\ref{fig:enclosewithgap}. Start and goal states are depicted as green (\textcolor{green}{$\bullet$}) and red  (\textcolor{red}{${\bullet}$}), respectively. Each state space is limited to the interval [0,1] for sampling, while gray areas indicate the collision regions.
 All tested planners sample 100 states per batch and  Each planner is run 100 times with different random seeds, and is offered 100 seconds in $\mathbb{R}^{16}$, respectively.  

 The planners are evaluated on the same domain as EIT*~\cite{EIT}, Wall Gap featuring a wall with a narrow passage in Fig.~\ref{fig:gap}. As EIT* has already outperformed others here, our main comparison involves EIT* and 
 \algname . As showcased in Fig.~\ref{fig:wall}, \algname\ outperforms EIT* in the evaluated key metrics. 

Another tested domain, depicted in Fig.~\ref{fig:enclosewithgap}, is adapted from Fig.~3b~\cite{ait}. Comparative results of all evaluated planners are presented in Fig.~\ref{fig:ex2}. In $\mathbb{R}^{16}$, \algname\ stands out as the only planner to reach the $100\%$ success rate, with EIT, and AIT* achieving rates of $99\%$, and $16\%$, respectively.  As demonstrated in~\ref{fig:ex2}, \algname\ outperforms EIT* in defined key metrics.

\subsection{Planning on A Campus Map}
While path planning in 2-dimensional terrain might seem simpler than in higher dimensions, real-world 2D settings pose distinct challenges due to their intricate and diverse obstacle configurations, especially for applications like fixed-altitude drone navigation and ground vehicles, where precision in pathfinding is paramount. All evaluated planners are applied for a path planning task on a Campus Map sized at 3160 $\times$ 3724, which spans approximately $4.7659\times10^6$ $\text{m}^2$. The planning mission starts, depicted as green (\textcolor{green}{$\bullet$}), and terminates at red  (\textcolor{red}{${\bullet}$}), respectively, as illustrated in Fig.~\ref{unhcm}. Dark brown, the shadow of the building, and dark green areas are identified as obstacles. Each planner samples 6000 states per batch across this terrain. All planners are run 100 trails and given 2 seconds for each trail. The quadrotor, steered by a PX5 flight controller, traces the planned waypoints. The dimension of the drone is $50\times50\times21.5$ $\text{cm}^3$ with a GPS accuracy 0.6 m. Its size is considered for collision checks. 

The experimental results for all planners are demonstrated in
Fig.~\ref{unhat}.
Within this domain, only AIT* attains a $96\%$ success rate while others achieve $100\%$. Notably, AIT* does not show particular advantages over other planners in this domain. \algname\ consistently outperforms others in quickly converging to the optimum.

 \section{Conclusions and Future Work}
  In this paper, we present \algname , a novel asymptotically optimal sampling-based planner. \algname\ exploits the strengths of bidirectional heuristic search with a new lazy strategy, and the uniform-cost search adhering to estimated bounds, to construct a bidirectional preliminary motion tree. This tree can efficiently refine the informed region for searching more efficiently and serves an accurate and admissible heuristic. This heuristic is then used to direct a bidirectional search to find a valid path. Empirical evidence from the simulated and real-world experiments highlights that \algname\ outperforms other informed sampling-based planners in finding initial solutions and optimizing convergence in the tested domains. Acknowledging the challenges presented by Kinodynamic motion planning, due to following system constraints, the future work aims to leverage bidirectional geometric search efforts to guide forward searches within kinodynamic constraints.

      





\bibliographystyle{IEEEtran}
\bibliography{ICRA}

\end{document}